\newtheorem{mydefinition}{Definition}
\newtheorem{proposition}[mydefinition]{Proposition}
\newtheorem{theorem}[mydefinition]{Theorem}
\newtheorem{lemma}[mydefinition]{Lemma}
\newtheorem{definition}[mydefinition]{Definition}
\newcommand{\M}{\mathcal{M}}
\newcommand{\real}{\mathbb{R}}
\newcommand{\myvec}[1]{\mathbf{#1}}
\newcommand{\myvecsym}[1]{\boldsymbol{#1}}
\newcommand{\vphi}{\myvecsym{\phi}}
\newcommand{\vPhi}{\myvecsym{\Phi}}
\newcommand{\vtheta}{\myvecsym{\theta}}
\newcommand{\vTheta}{\myvecsym{\Theta}}
\newcommand{\vx}{\myvec{x}}
\newcommand{\expect}[1]{\mathbb{E}\left[ {#1} \right]}
\newcommand{\calC}{{\cal C}}
\newcommand{\be}{\begin{equation}}
\newcommand{\ee}{\end{equation}}
\newcommand{\bea}{\begin{eqnarray}}
\newcommand{\eea}{\end{eqnarray}}
\newcommand{\beaa}{\begin{eqnarray*}}
\newcommand{\eeaa}{\end{eqnarray*}}
\DeclareMathAlphabet{\mathpzc}{OT1}{pzc}{m}{n}
\newcommand{\keywordDef}[1]{{\emph{#1}}}
\icmltitlerunning{Linear and Parallel Learning of Markov Random Fields}
\begin{document}

\twocolumn[
\icmltitle{Linear and Parallel Learning of Markov Random Fields}

\icmlauthor{Yariv Dror Mizrahi$^1$}{yariv@math.ubc.ca}
\icmlauthor{Misha Denil$^2$}{misha.denil@cs.ox.ac.uk}
\icmlauthor{Nando de Freitas$^{1,2}$}{nando@cs.ox.ac.uk}
\icmladdress{$^1$University of British Columbia, Canada\\
$^2$University of Oxford, United Kingdom}

\icmlkeywords{scaling, distributed learning, MRFs, exact inference, pseudo-likelihood, maximum likelihood, probabilistic graphical models, random fields}

\vskip 0.3in
]

\begin{abstract}
We introduce a new embarrassingly parallel parameter learning algorithm for Markov random fields with untied parameters which is efficient for a large class of practical models.  Our algorithm parallelizes naturally over cliques and, for graphs of bounded degree, its complexity is linear in the number of cliques. Unlike its competitors, our algorithm is fully parallel and for log-linear models it is also data efficient, requiring only the local sufficient statistics of the data to estimate parameters.
\end{abstract}



\section{Introduction}

Markov Random Fields (MRFs), also known as undirected probabilistic graphical models, are ubiquitous structured probability models that have  significantly impacted a large number of fields, including computer vision \cite{Li:2001,Szeliski:2008}, computational photography and graphics \cite{Agarwala:2004}, computational neuroscience \cite{Ackley:1985},  bio-informatics \cite{Yanover:2007}, sensor networks \cite{Liu:2012}, social networks \cite{Strauss:1990}, Markov logic \cite{Richardson:2006}, natural language processing \cite{Lafferty:2001,Sutton:2012} and statistical physics \cite{Kindermann:1980}. As pointed out in \citet{Wainwright:2008} there are also many applications in statistics, constraint satisfaction and combinatorial optimization, error-correcting codes and epidemiology. Not surprisingly, many comprehensive treatments of this important topic have appeared in the last four decades 
\cite{Kindermann:1980,Lauritzen:1996,Bremaud:2001,Koller:2009,Murphy:2012}.

Despite the great success and impact of these models, fitting them to data remains a formidable challenge. Although the log-likelihood  is typically convex in the parameters, the gradient of these models is intractable.

In many cases, maximum likelihood in these models is \emph{data efficient} in the sense that the data term in the gradient can be easily precomputed, making its evaluation trivial during optimization. The main difficulty with maximum likelihood is that it is not \emph{model efficient} since evaluating the gradient involves computing expectations over the model distribution.  This requires evaluating a sum with exponentially many terms, which is intractable for even moderately sized models. The intractability of exact maximum likelihood has prompted the introduction of many approximate methods of parameter estimation \cite{Besag:1975,Hinton:2000,Hyvarinen:2005,Marlin:2010,Varin:2011,Marlin:2011,Swersky:2011}.

An important class of approximate method for this problem are stochastic approximation methods, which approximate the model term by drawing samples from the model distribution, typically via MCMC. This simulation is costly and often many samples are required for accurate estimation. Moreover, in settings where the parameters or data must be distributed across many machines such simulation poses additional difficulties.

Another approach is to approximate the maximum likelihood objective with a factored alternative.  The leading method in this area is pseudo-likelihood.  In this approach the joint distribution over all variables in the MRF is replaced by a product of conditional distributions for each variable.  Replacing the joint distribution with a product of conditionals eliminates the model term from the gradient of the pseudo-likelihood objective, which circumvents the model inefficiency of maximum likelihood estimation.  However, pseudo-likelihood is not data efficient, since the conditional distributions often depend on the actual data and the current value of the parameters.  We return to this issue in more detail in Section~\ref{sec:model-data-efficiency}.

Applying pseudo likelihood in a distributed setting is also difficult, because the conditional distributions share parameters.  Several researchers have addressed this issue by proposing to approximate pseudo-likelihood by disjointly optimizing each conditional and combining the parameters using some form of averaging \cite{Ravikumar:2010,Wiesel:2012,Liu:2012}.

In this paper we introduce a new approach to parameter estimation in MRFs with untied parameters, which avoids the model inefficiency of maximum likelihood for an important class of models while preserving its data efficiency.  Moreover, our algorithm is embarrassingly parallel and can be implemented in a distributed setting without modification. Our algorithm replaces the joint maximum likelihood problem with a collection of much smaller auxiliary maximum likelihood problems which can be solved independently.

We prove that if the auxiliary problems satisfy certain conditions, the relevant parameters in the auxiliary problems converge to the values of the true parameters in the joint model.  Our experiments show that good performance is achieved in this case and that good performance is still achieved when these conditions are not satisfied.  Violating the conditions for convergence sacrifices theoretical guarantees in exchange for even further computational savings while maintaining good empirical performance.

Under a strong assumption (which is generally not satisfied in practice) we prove that our algorithm is exactly equal to maximum likelihood on the full joint distribution.  While not directly applicable, this result provides additional insight into why our approach is effective.

A similar method was recently, and independently, introduced in the context of \emph{Gaussian graphical models} by \citet{Meng:2013}. In that paper, the authors consider local neighborhoods of nodes, whereas we consider neighborhoods of cliques, and they rely on a convex relaxation via the \emph{Schur complement} to derive their algorithm for inverse covariance estimation. At the time of revising this paper, the same authors have shown that the convergence rate to the true parameters with their method is comparable to centralized maximum likelihood estimation \cite{Meng:2014}. 

Although our work and that of Meng arrive at distributed learning algorithms via different paths, and while Meng et al.\ consider only Gaussian graphical models, it is clear that both works show that it is possible to capitalize on graph structures beyond low tree width to design algorithms that are both data and model efficient and exhibit good empirical performance.


\section{Model Specification and Learning Objectives}

We are interested in estimating the parameter vector $\vtheta$ of a positive distribution $p(\vx\,|\,\vtheta)>0$ that satisfies the Markov properties of an  undirected graph $G$. That is, a distribution that
can be represented as a product of factors, one
per maximal clique, 
\begin{align}
p(\vx\,|\,\vtheta) = \frac{1}{Z(\vtheta)} \prod_{c \in \calC} \psi_c(\vx_c\,|\,\vtheta_c),
\end{align}
where
 $\calC$ is the set of maximal cliques of $G$,
$\psi_c(\vx_c\,|\,\vtheta_c)\geq 0$ is the \keywordDef{potential function} or \keywordDef{factor}
associated with the variables in clique $c$,
and
$Z(\vtheta)$ is
the \keywordDef{partition function}
given by
\begin{align}
Z(\vtheta) = \sum_{\vx} \prod_{c \in \calC} \psi_c(\vx_c\,|\,\vtheta_c).
\end{align}
In such models we often use exponential functions to represent the potentials,
\begin{align}
\psi_c(\vx_c\,|\,\vtheta_c)  = \exp(-E(\vx_c\,|\,\vtheta_c)),
\end{align}
where $E(\vx_c\,|\,\vtheta_c) \in \real$ is called the \keywordDef{energy}, which we will assume is chosen so that the parameters are identifiable.
The resulting joint distribution can then be written as
a \keywordDef{Gibbs distribution}
\begin{align*}
p(\vx\,|\,\vtheta) = \frac{1}{Z(\vtheta)} \exp(-\sum_c E(\vx_c\,|\,\vtheta_c)).
\end{align*}
When the energy is a linear function of the parameters, i.e.\ 
$E(\vx_c\,|\,\vtheta_c) = -\vtheta_c^T \vphi_c(\vx_c)$
where $\vphi_c(\vx_c)$  is a feature vector derived from the values of the
variables $\vx_c$,
we have a \keywordDef{maximum entropy} or \keywordDef{log-linear} model \cite{Wasserman:2004,Buchman:2012,Murphy:2012}. The features in these models are also referred to as local sufficient statistics.

\subsection{Maximum Likelihood}

There is (in general) no closed form solution
for the ML estimate of the parameters of an MRF, so gradient-based optimizers are needed.

Consider the fully-observed maximum entropy model
\begin{align}
p(\vx\,|\,\vtheta) = 
\frac{1}{Z(\vtheta)} \exp(\sum_c \vtheta_c^T \vphi_c(\vx))
\end{align}
where $c$ indexes the maximal cliques.
The scaled log-likelihood is given by
\begin{align*}
\ell(\vtheta)  &=
\frac{1}{N} \sum_{n=1}^{N} \log p(\vx_n\,|\,\vtheta)
 \\
  &= \frac{1}{N} \sum_{n=1}^{N} \left[ \sum_c \vtheta_c^T \vphi_c(\vx_n) -
 \log Z(\vtheta) \right]
\end{align*}
which is a convex function of $\vtheta$.

The derivative for the parameters of a particular clique, $q$,
is given by
\begin{align}
\frac{\partial \ell}{\partial \vtheta_q}
 &= \frac{1}{N} \sum_{n=1}^{N} \left[ \vphi_q(\vx_{n}) -
 \frac{\partial}{\partial \vtheta_q} \log Z(\vtheta)
\right] \enspace,
\end{align}
where
\begin{align}
\frac{\partial}{\partial \vtheta_q} \log Z(\vtheta)
 =
\expect{\vphi_q(\vx)\,|\,\vtheta}
 =   \sum_{\vx} \vphi_q(\vx) p(\vx\,|\,\vtheta) \enspace.
\label{eqn:derivLogZ}
\end{align}
Equation~\ref{eqn:derivLogZ} is the expectation of the feature $\vphi_q(\vx)$ over the model distribution.  For many models of interest this quantity is intractable.

The full derivative of the log-likelihood contrasts the model expectation against the expected value of the feature over the data,
\begin{align}
\frac{\partial \ell}{\partial \vtheta_q}
 = \frac{1}{N} \sum_{n=1}^{N} 
\vphi_q(\vx_n) - \expect{\vphi_q(\vx)\,|\,\vtheta} \enspace.
\label{eq:gradml}
\end{align}
At the optimum these two terms will be equal and the empirical distribution of the features will match
the model predictions.

\subsection{Maximum Pseudo-Likelihood}

To surmount the intractable problem of computing expectations over the model distribution, pseudo-likelihood considers a simpler factorised objective function,
\begin{align}
\ell^{PL}(\vtheta)  =
\frac{1}{N} \sum_{n=1}^{N} \sum_{m=1}^{M} \log p(x_{mn}\,|\,\vx_{-mn},\vtheta)
\end{align}
where $\vx_{-mn}$ denotes all the components of the $n$-th data vector, except for component $m$. (For models with sparse connectivity, we only need to condition on the neighbors of node $m$.)
In the binary, log-linear case, the gradient of this objective can be expressed in contrastive form,
\begin{align*}
\frac{\partial \ell^{PL}}{\partial \vtheta_q}
 =  \frac{1}{N}  \sum_{n,m}  p(\bar{x}_{mn}^m\,|\,\vx_{-mn},\vtheta) 
 \left[
  \vphi_q(\vx_n) - 
  \vphi_q(\bar{\vx}_n^m)
\right]\enspace,
\label{eqn:gradpl}
\end{align*}
where $\bar{\vx}_n^m$ is the data vector $\bar{\vx}_n$ with the $m$-th bit flipped. That is,
$\bar{x}_{mn}^i = 1 - x_{mn}$ if $i=m$ and $x_{mn}$ otherwise \cite{Marlin:2010}. 

\subsection{Model and Data Efficiency}
\label{sec:model-data-efficiency}

There are two terms in the gradient of Equation~\ref{eq:gradml}.  The first term is an empirical expectation, $\frac{1}{N} \sum_{n=1}^{N}\vphi_q(\vx_n)$, and depends only on the data.  The value of this term for each clique can be pre-computed before parameter optimization begins, making this term of the gradient extremely cheap to evaluate during optimization.

The data term in the maximum likelihood gradient is contrasted with an expectation over the model distribution, $\expect{\vphi_q(\vx)\,|\,\vtheta}$, which is a sum over exponentially many configurations.  For large models this term is intractable.

We describe this situation by saying that maximum likelihood estimation is data efficient, since the terms involving only the data can be computed efficiently.  However, maximum likelihood is not model efficient, since the model term in the gradient is intractable, and the difficulty in evaluating it is the primary motivation for the development of alternative objectives like pseudo-likelihood.

Pseudo-likelihood addresses the model inefficiency of maximum likelihood by eliminating the model term from the gradient, which makes pseudo-likelihood model efficient.  However, pseudo-likelihood is not data efficient, since computing the gradient requires access to the full conditional distributions $p(\bar{x}_{mn}^m\,|\,\vx_{-mn},\vtheta)$.  Because of this the outer sum over data examples must be computed for each gradient evaluation. (Note that for binary models the full conditionals correspond to logistic regressions, so any advances in scaling logistic regression to massive models and datasets would be of use here.)

In the following section we introduce the LAP algorithm (which stands for Linear and Parallel), which uses a particular decomposition of the graph to avoid the exponential cost in maximum likelihood, but unlike pseudo-likelihood our algorithm is fully parallel and maintains the data efficiency of maximum likelihood estimation.  Our algorithm divides the full parameter estimation process into several fully independent sub-problems which can be solved in parallel.  Once each sub-problem has been solved the solutions of the sub-problems are combined to give a solution to the full problem.


\section{Algorithm Description}

The LAP algorithm operates by splitting the joint parameter estimation problem into several independent sub-problems which can be solved in parallel.  Once the sub-problems have been solved, it combines the solutions to each sub-problem together into a solution to the full problem.

For a fixed clique $q$ we define its 1-neighborhood
\begin{align*}
A_q = \bigcup_{c \cap q \neq \emptyset} c
\end{align*}
to contain all of the variables of $q$ itself as well as the variables with at least one neighbor in $q$.

LAP creates one sub-problem for each maximal clique in the original problem by defining an \emph{auxiliary MRF}, $\M_q$, over the variables in $A_q$.  Details on how to construct the auxiliary MRF will be discussed later, for now we assume we have an auxiliary MRF on $A_q$ and that it contains a clique over the variables in $q$ that is parametrized the same way as $q$ in the original problem.

LAP derives the parameter vector $\vtheta_q$ for the full problem by estimating parameters in the auxiliary MRF on $A_q$ using maximum likelihood and reading off the parameters for the clique $q$ directly.  The steps of the algorithm are summarized in Algorithm~\ref{alg:lap}.

\begin{algorithm}[tb]
   \caption{LAP}
   \label{alg:lap}
\begin{algorithmic}
   \STATE {\bfseries Input:} MRF with maximum cliques $\mathcal{C}$
   \FOR{$q \in \mathcal{C}$}
   \STATE Construct auxiliary MRF $\M_q$ on variables in $A_q$
   \STATE Estimate parameters $\vtheta^{\M_q}$ in $\M_q$ using ML
   \STATE Set $\vtheta_q \leftarrow \vtheta_q^{\M_q}$
   \ENDFOR
\end{algorithmic}
\end{algorithm}

\begin{figure}[t!]
\centering
\includegraphics{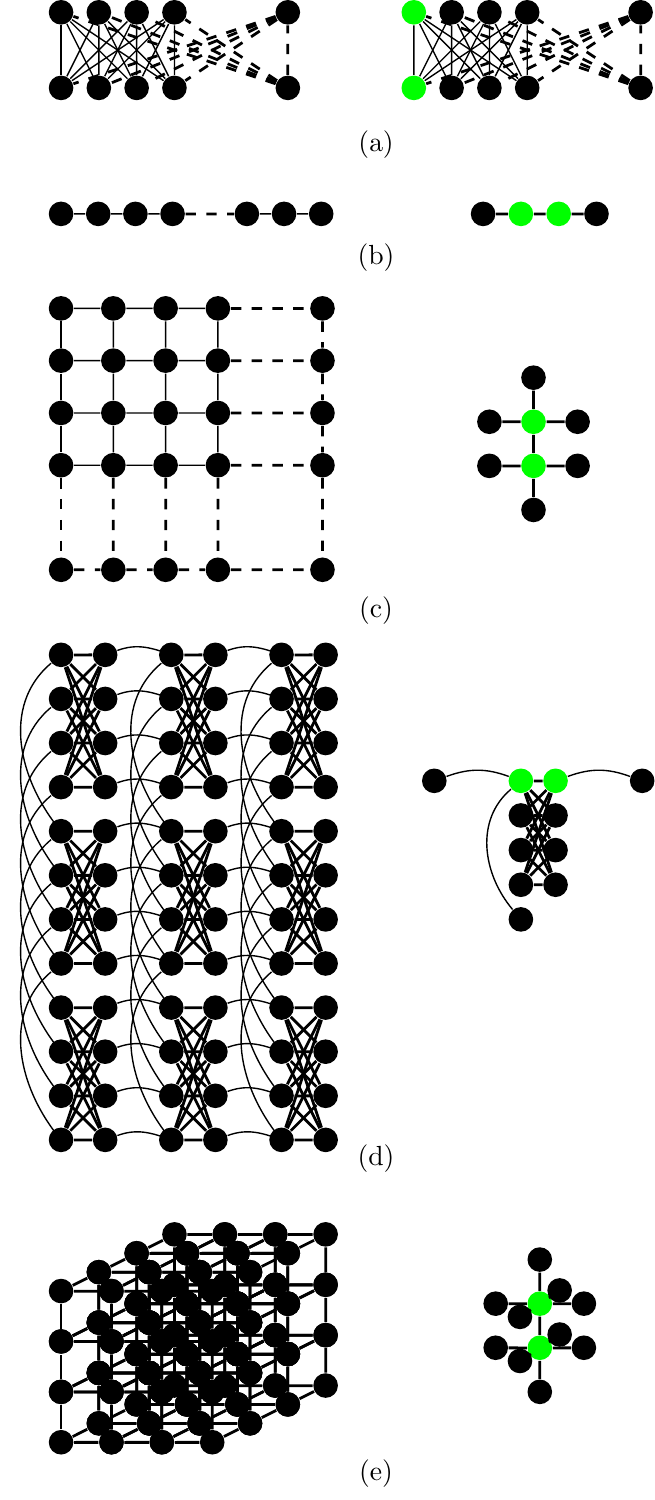} 
\caption{The left column shows several popular MRFs:
(a) a restricted Boltzmann machine (RBM),   (b) a chain graph, (c) a  2-D Ising  grid, (d) a Chimera $3\times 3\times 4$ lattice, and  (e) a 3-D Ising lattice. The right hand side shows the corresponding 1-neighborhoods for cliques of interest (in green). Models (b) to (e) have small 1-neighborhoods and can learned efficiently with the LAP algorithm.}
\label{fig:mrfs}
\end{figure}

In a log-linear model, when estimating the vector of parameters $\vtheta^{\M_q}$ of the auxiliary MRF by maximum likelihood, the relevant derivative is
\begin{align*}
\frac{\partial \ell^{\M_q}}{\partial \vtheta^{\M_q}_q}
 = \frac{1}{N} \sum_{n=1}^{N}  
\vphi_q(\vx_{A_qn}) - \expect{\vphi_q(\vx_{A_q}) |\vtheta^{\M_q}} \enspace.
\end{align*}
This approach is data efficient, since 
the sufficient statistics $\frac{1}{N} \sum_{n=1}^{N}\vphi_q(\vx_{A_qn}) $ can be easily pre-computed. Moreover, the data vector $\vx_n$ can be stored in a distributed fashion, with the node estimating the MRF $\M_q$ only needing access to the sub-vector $\vx_{A_qn}$. In addition, 
LAP is model efficient since 
the expectation $\expect{\vphi_q(\vx_{A_q}) |\vtheta^{\M_q}}$ can be easily computed when the number of variables in $A_q$ is small. To illustrate this point, consider the models shown in Figure~\ref{fig:mrfs}. 
For dense graphs, such as the restricted Boltzmann machine, the exponential cost of enumerating over all the variables in $A_q$ is prohibitive. However, for other practical MRFs of interest, including lattices and Chimeras \cite{Denil:2011}, this cost is perfectly acceptable.

\subsection{Construction of the Auxiliary MRF}

The effectiveness of LAP comes from proper construction of the auxiliary MRF, $\M_q$.  As already mentioned, $\M_q$ must contain the clique $q$, which must be parametrized in the same way as in the joint model.  This requirement is clear from the previous section, otherwise the final step in Algorithm~\ref{alg:lap} would be invalid.

We will see in the analysis section that it is desirable for $\M_q$ to be as close to the marginal distribution on $\vx_{A_q}$ as possible.  This means we must include all cliques from the original MRF which are subsets of $A_q$.  Additionally, marginalization may introduce additional cliques not present in the original joint distribution.  It is clear that these cliques can only involve variables in $A_q\setminus q$, but determining their exact structure in general can be difficult.

We consider three strategies for constructing auxiliary MRFs, which are distinguished by how they induce clique structures on $A_q\setminus q$.  The three strategies are as follows.

\paragraph{Exact} Here we compute the exact structure of the marginal distribution over $A_q$ from the original problem.  We have chosen our test models to be ones where the marginal structure is readily computed.


\paragraph{Dense} For many classes of model the marginal over $A_q$ involves a fully parametrized clique over $A_q\setminus q$ for nearly every choice of $q$ (for example, this is the case in lattice models).  The dense variant assumes that the marginal always has this structure.  Making this choice will sometimes over-parametrize the marginal, but avoids the requirement of explicitly computing its structure. 

\paragraph{Pairwise}  Both the exact and dense strategies create high order terms in the auxiliary MRF.  While high order terms do exist in the marginals of discrete MRFs, it is computationally inconvenient to include them, since the add many parameters to each sub-problem.  In the pairwise variant we use the same graph structure as in dense, but here we introduce only unary and binary potentials over $A_q\setminus q$.  This results in a significant computational savings for each sub-problem in LAP, but fails to capture the true marginal distribution in many cases (including all of the example problems we consider).


\section{Experiments}

\begin{figure}[t]
\centering
\includegraphics[width=0.49\linewidth]{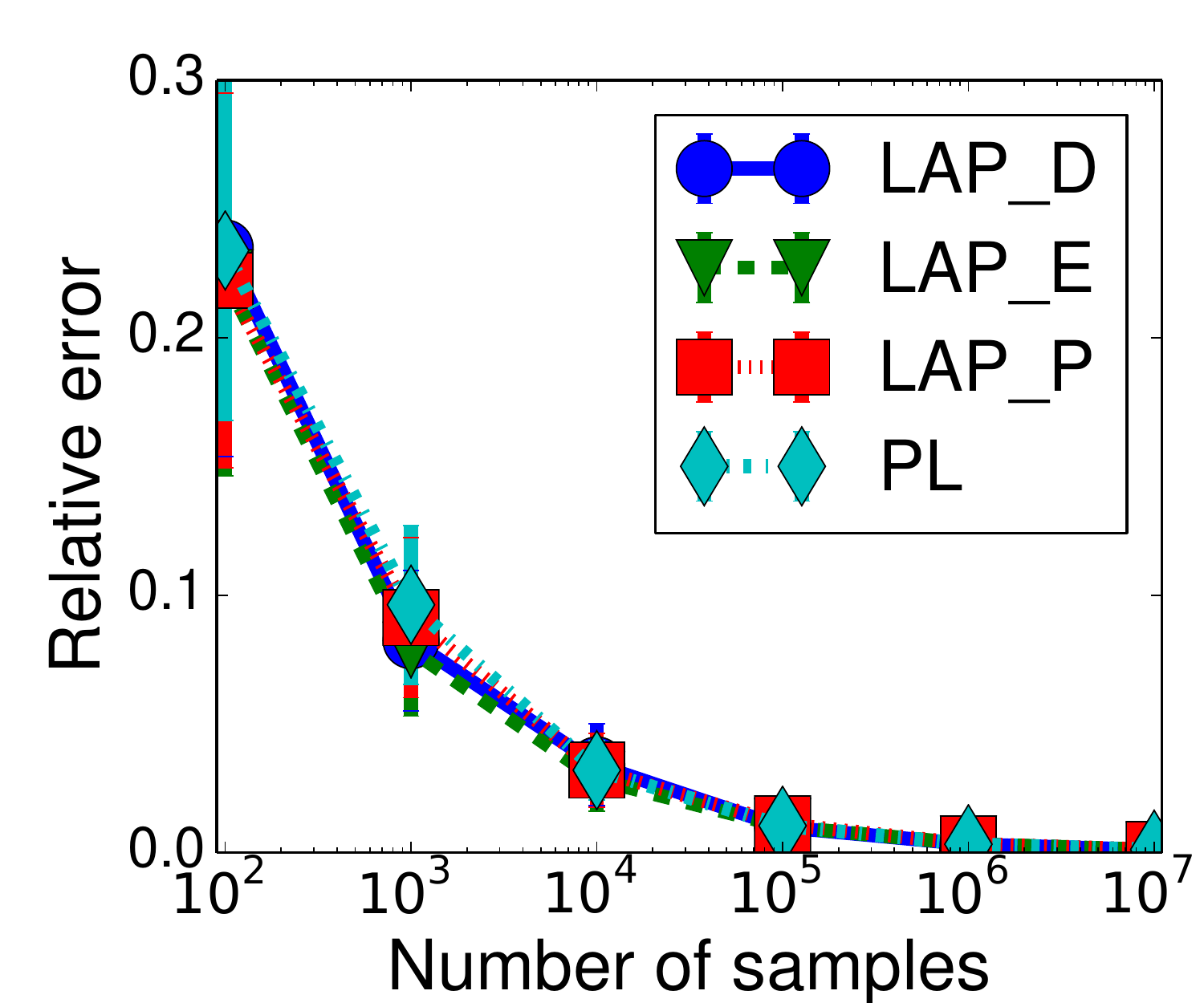}
\includegraphics[width=0.49\linewidth]{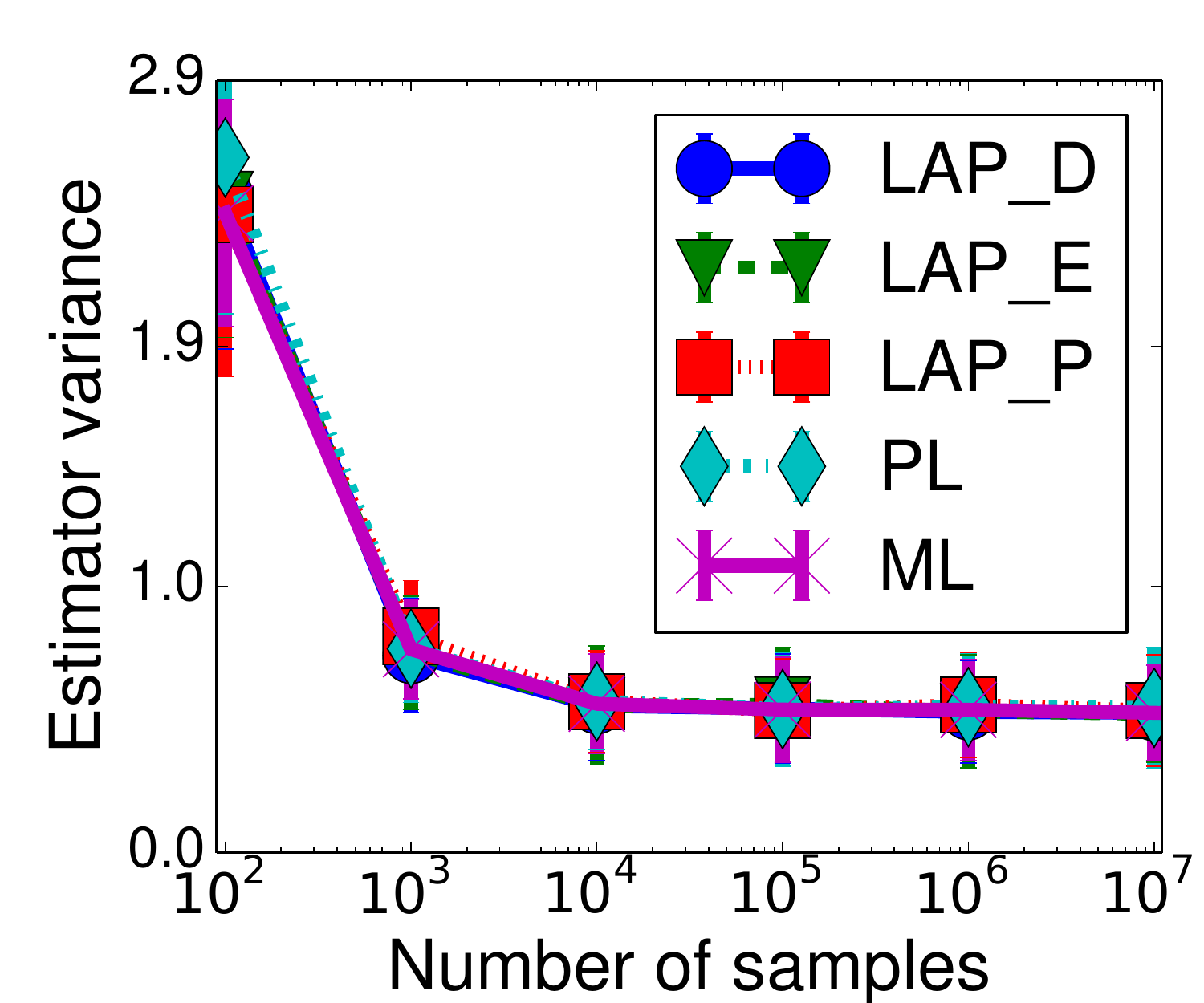}
\caption{\textbf{Left:} Relative error of parameter estimates compared to maximum likelihood for LAP and pseudo-likelihood on a $4\times 4$ Ising grid.  Error bars show the standard deviation over several runs.  \textbf{Right:} Variance of the parameter estimates for each algorithm.}
\label{fig:4x4}
\end{figure}

In this section we describe some experiments designed to show that the LAP estimator has good empirical performance.  We focus on small models where exact maximum likelihood is tractable in order to allow performance to be measured.  We chose to focus our experiments on demonstrating accuracy rather than scalability since the scaling and data efficiency properties of LAP are obvious.

The purpose of the experiments in this section is to show two things:
\begin{enumerate}
\item The accuracy of LAP estimates is not worse than its main competitor, pseudo-likelihood; and
\item LAP achieves good performance even when the exact marginal structure is not used.
\end{enumerate}

In all of our experiments we compare pseudo-likelihood estimation against LAP using the three different strategies for constructing the auxiliary MRF discussed in the previous section.  In each plot, lines labeled \texttt{PL} correspond to pseudo-likelihood and \texttt{ML} corresponds to maximum likelihood.  \texttt{LAP\_E}, \texttt{LAP\_D} and \texttt{LAP\_P} refer respectively to LAP with the exact, dense and pairwise strategies for constructing the auxiliary MRF.

We compare LAP and pseudo-likelihood to maximum likelihood estimation on three different model classes. The first is a $4\times 4$ Ising grids with 4-neighborhoods, and the results are shown in Figure~\ref{fig:4x4}.  The second is a $4\times 4\times 4$ Ising lattice with 6-neighborhoods, which is shown in Figure~\ref{fig:4x4x4}.  Finally, we also consider a Chimera $3\times 3\times 3$ model, with results shown in Figure~\ref{fig:chimera-3x3x3}.

The procedure for all models is the same: we choose the generating parameters uniformly at random from the interval $[-1, 1]$ and draw samples approximately from the model.  We then fit exact maximum likelihood parameters based on these samples, and compare the parameters obtained by pseudo-likelihood and LAP to the maximum likelihood estimates.  The left plot in each figure shows the mean relative error of the parameter estimates using the maximum likelihood estimates as ground truth.  Specifically, we measure
\begin{align*}
\operatorname{err}(\theta) = \|\theta^{ML}\|^{-1}\cdot\|\theta - \theta^{ML}\|
\end{align*}
for each estimate on each set of samples and average over several runs.

\begin{figure}[t]
\centering
\includegraphics[width=0.49\linewidth]{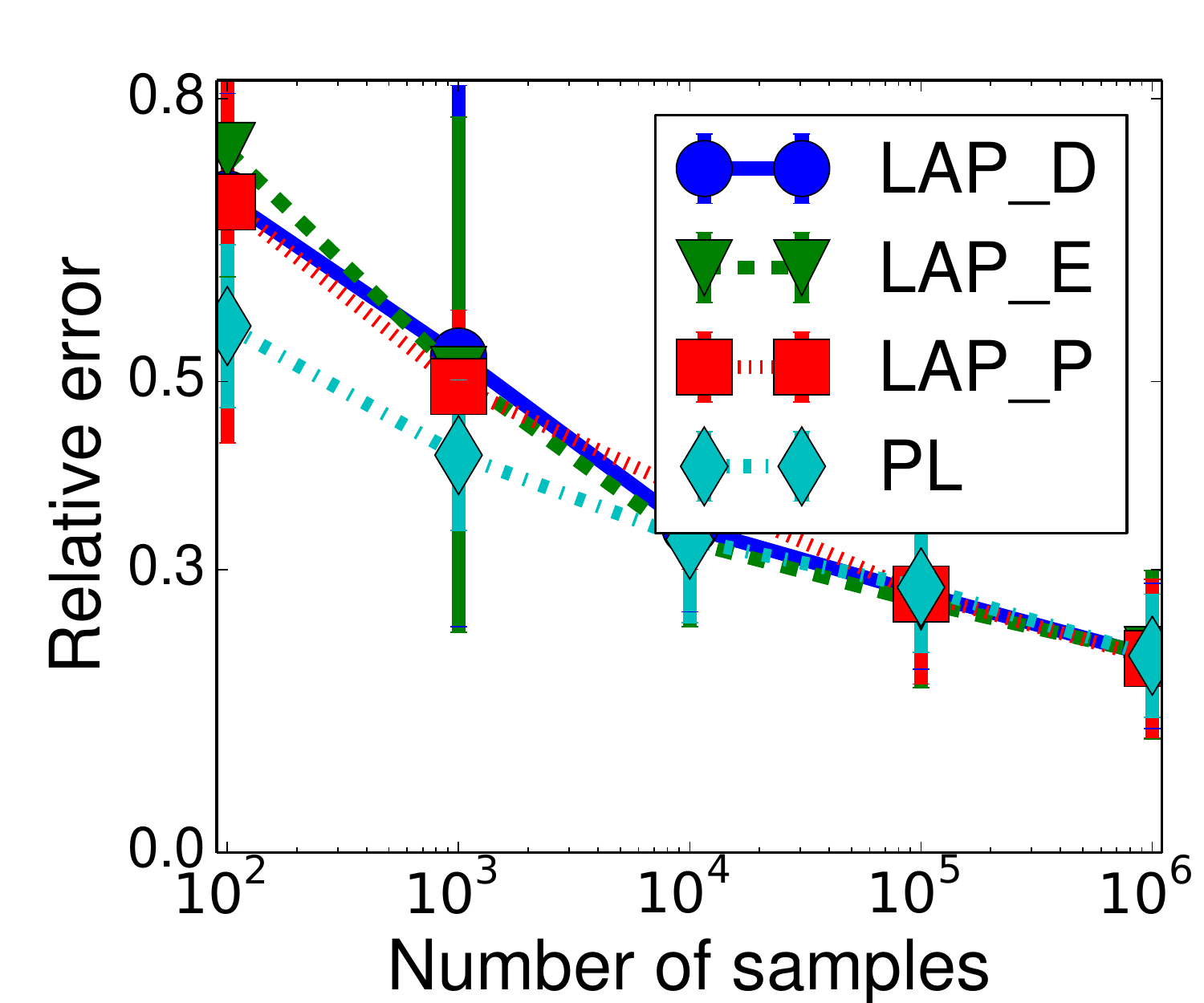}
\includegraphics[width=0.49\linewidth]{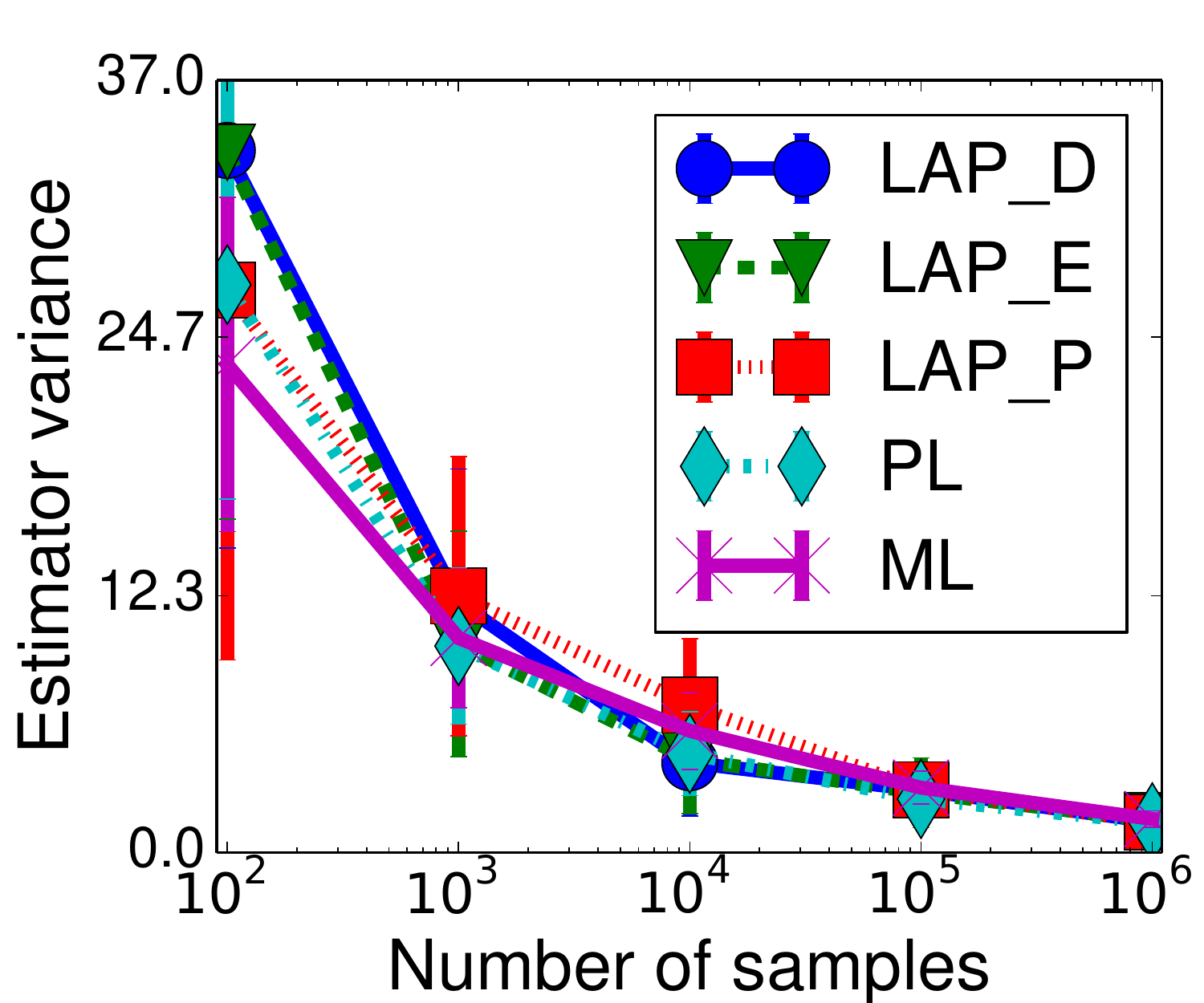}
\caption{\textbf{Left:} Relative error of parameter estimates compared to maximum likelihood for LAP and pseudo-likelihood on a $4\times 4\times 4$ Ising lattice.  Error bars show the standard deviation over several runs.  \textbf{Right:} Variance of the parameter estimates for each algorithm.}
\label{fig:4x4x4}
\end{figure}

We also measure the variance of the estimates produced by each algorithm over several runs. In this case we measure the variance of the estimates of each parameter separately and average these variances over all parameters in the model.  These measurements are shown in the right plot in each figure.  For reference we also show the variance of the maximum likelihood estimates in these plots.

In all of the experiments we see that the performance of all of the LAP variants is basically indistinguishable from pseudo-likelihood, except for small numbers of samples.  Interestingly, \texttt{LAP\_P} does not perform noticeably worse than the other LAP variants on any of the problems we considered here.  This is interesting because \texttt{LAP\_P} approximates the marginal with a pairwise MRF, which is not sufficient to capture the true marginal structure in any of our examples.  \texttt{LAP\_P} is also the most efficient LAP variant we tested, since the auxiliary MRFs it uses have the fewest number of parameters.


\section{Theory}

In this section show that matching parameters in the joint and the marginal distributions is valid, provided the parameterizations are chosen correctly. 
We then prove consistency of the LAP algorithm and illustrate its connection to maximum likelihood.

Undirected probabilistic graphical models can be specified, locally, in terms of Markov properties and conditional independence and, globally, in terms of an energy function $\sum_c E(\vx_c|\vtheta_c)$.  The Hammersley-Clifford theorem~\cite{hammersley1971markov} establishes the equivalence of these two representations.

One important fact that is often omitted is that the energy function and the partition function are not unique. 
It is however possible to obtain uniqueness, for both of these functions, by imposing normalization with respect to a setting of the random variables of the potential. This gives rise to the concept of \emph{normalized potential} \cite{Bremaud:2001}:
\begin{definition}
\label{norm_potential}
 A Gibbs potential $E(\vx_c|\vtheta_c)$ is said to be normalized with respect to zero if $E(\vx_c|\vtheta_c) = 0$ whenever there exists $t \in c$ such that $\vx_{t} = 0$.
 \end{definition}  
(In this section, we use the term \emph{Gibbs potential}, or simply \emph{potential}, to refer to the energy so as to match the nomenclature of \cite{Bremaud:2001}.)
The following theorem plays a central role in understanding the LAP algorithm. 
The proof can be found in \cite{Griffeath:1976,Bremaud:2001}: 
 \begin{theorem}
\emph{\textbf{[Existence and Uniqueness of the normalized potential]}}
\label{thm:np}
There exists one and only one (Gibbs) potential normalized with respect to zero corresponding to a Gibbs distribution.
\label{thm:uniqueness}
\end{theorem}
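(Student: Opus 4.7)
The plan is to establish existence by an explicit Möbius-inversion construction and uniqueness by induction on clique size, with the normalization condition supplying the boundary cancellations at each step.

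\textbf{Existence.} Starting from any Gibbs representation $p(\vx) = Z^{-1}\exp(-\sum_{c \in \calC} E_c(\vx_c))$ of the distribution, I would set $U(\vx) := \log p(\vzero) - \log p(\vx)$, which is well-defined since $p > 0$ and satisfies $U(\vzero) = 0$. For each non-empty subset $A$ of vertices, I define
\begin{align*}
\tilde E_A(\vx_A) \;:=\; \sum_{B \subseteq A} (-1)^{|A| - |B|}\, U(\vx^B),
\end{align*}
where $\vx^B$ agrees with $\vx$ on $B$ and is zero elsewhere. Möbius inversion then yields $U = \sum_A \tilde E_A$. Three properties must be checked: (i) $\tilde E_A$ depends only on $\vx_A$, which is immediate; (ii) $\tilde E_A$ vanishes whenever $x_t = 0$ for some $t \in A$, by pairing each $B \ni t$ with $B\setminus\{t\}$ (the values of $U$ agree while the signs cancel); and (iii) $\tilde E_A \equiv 0$ when $A$ is not a clique of $G$. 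Together these statements say that $\{\tilde E_c\}$ is a normalized Gibbs potential for $p$.

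\textbf{Uniqueness.} Suppose $\{E_c\}$ and $\{E'_c\}$ are both normalized Gibbs potentials yielding the same $p$. Then the two energies differ by the log-partition ratio,
\begin{align*}
\sum_{c \in \calC} \big(E_c(\vx_c) - E'_c(\vx_c)\big) \;=\; \log\frac{Z'}{Z} \;=:\; K \qquad \forall \,\vx.
\end{align*}
Evaluating at $\vx = \vzero$ and invoking normalization of both potentials gives $K = 0$. I would then proceed by induction on $|A|$: assuming $E_c = E'_c$ for every clique $c$ with $|c| < |A|$, I evaluate the identity at a configuration $\vx$ whose support lies in $A$. Clique terms with $c \not\subseteq A$ drop by normalization, terms with $c \subsetneq A$ drop by the inductive hypothesis, and what remains collapses to $E_A(\vx_A) = E'_A(\vx_A)$ for $\vx_A$ with no zero component; the case where $\vx_A$ itself has a zero coordinate is already handled by normalization on both sides.

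\textbf{Main obstacle.} The delicate step is property (iii) in the existence argument: turning the combinatorial statement that $A$ is not a clique of $G$ into the vanishing of a Möbius coefficient. The key lemma I plan to isolate is that the Möbius coefficient on $A$ of any function independent of some coordinate $x_t$ with $t \in A$ is zero (by the same pairing used in (ii)). Then, given $A$ not a clique, I pick a non-adjacent pair $s, t \in A$ and split $U = U_1 + U_2$, where $U_1$ collects contributions of cliques avoiding $s$ and $U_2$ of cliques containing $s$ (hence avoiding $t$, because $\{s,t\}$ is not an edge). Applying the lemma to each piece delivers (iii), after which the remainder of the proof is essentially bookkeeping.
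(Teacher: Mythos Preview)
Your proof is correct and is essentially the standard argument found in the references the paper cites. Note that the paper itself does not give a proof of this theorem: it simply states that the proof can be found in Griffeath (1976) and Br\'emaud (2001), and your M\"obius-inversion construction for existence together with the induction on clique size for uniqueness is exactly the route those references take.
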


\begin{figure}[t!]
\centering
\includegraphics[width=0.49\linewidth]{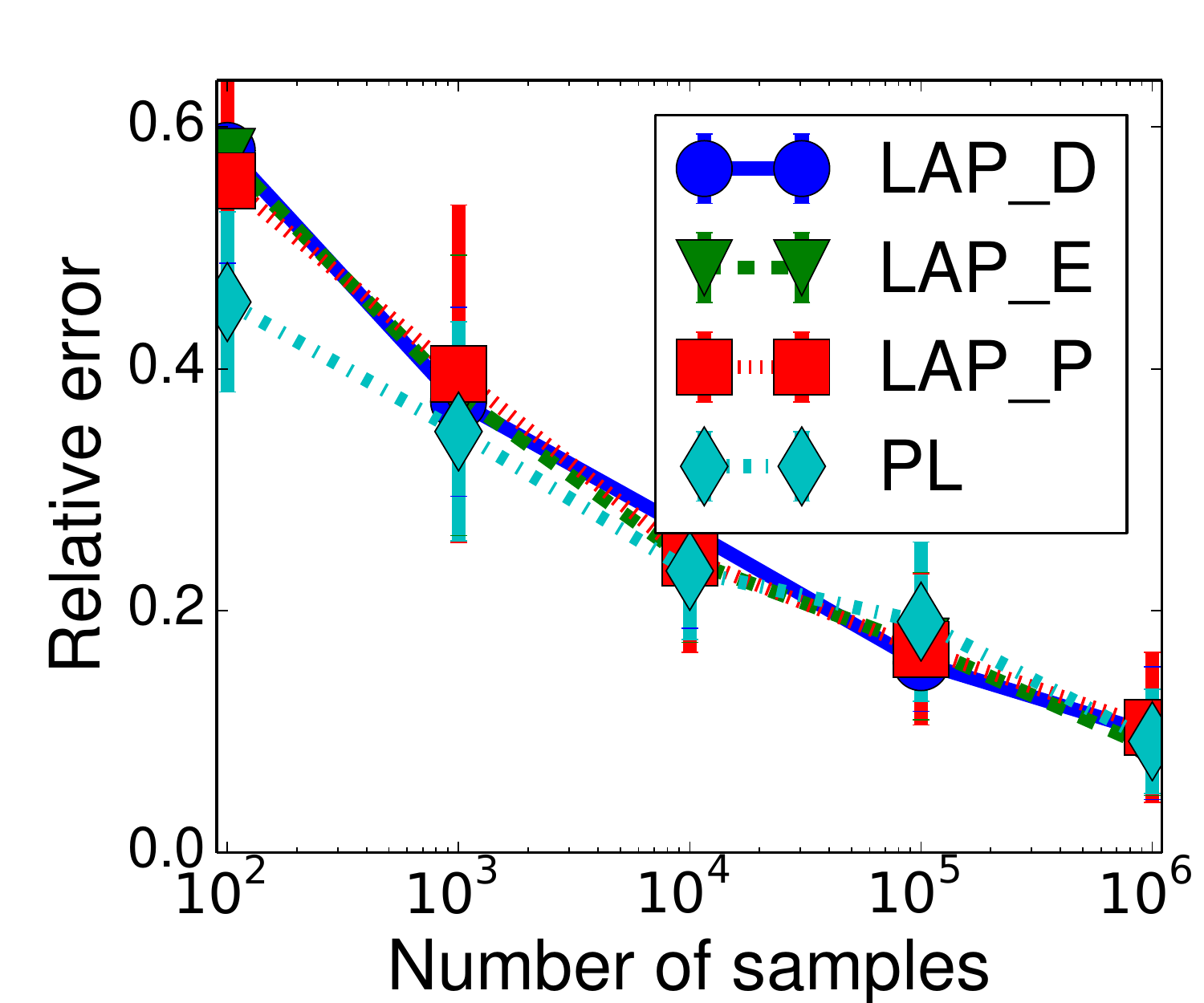}
\includegraphics[width=0.49\linewidth]{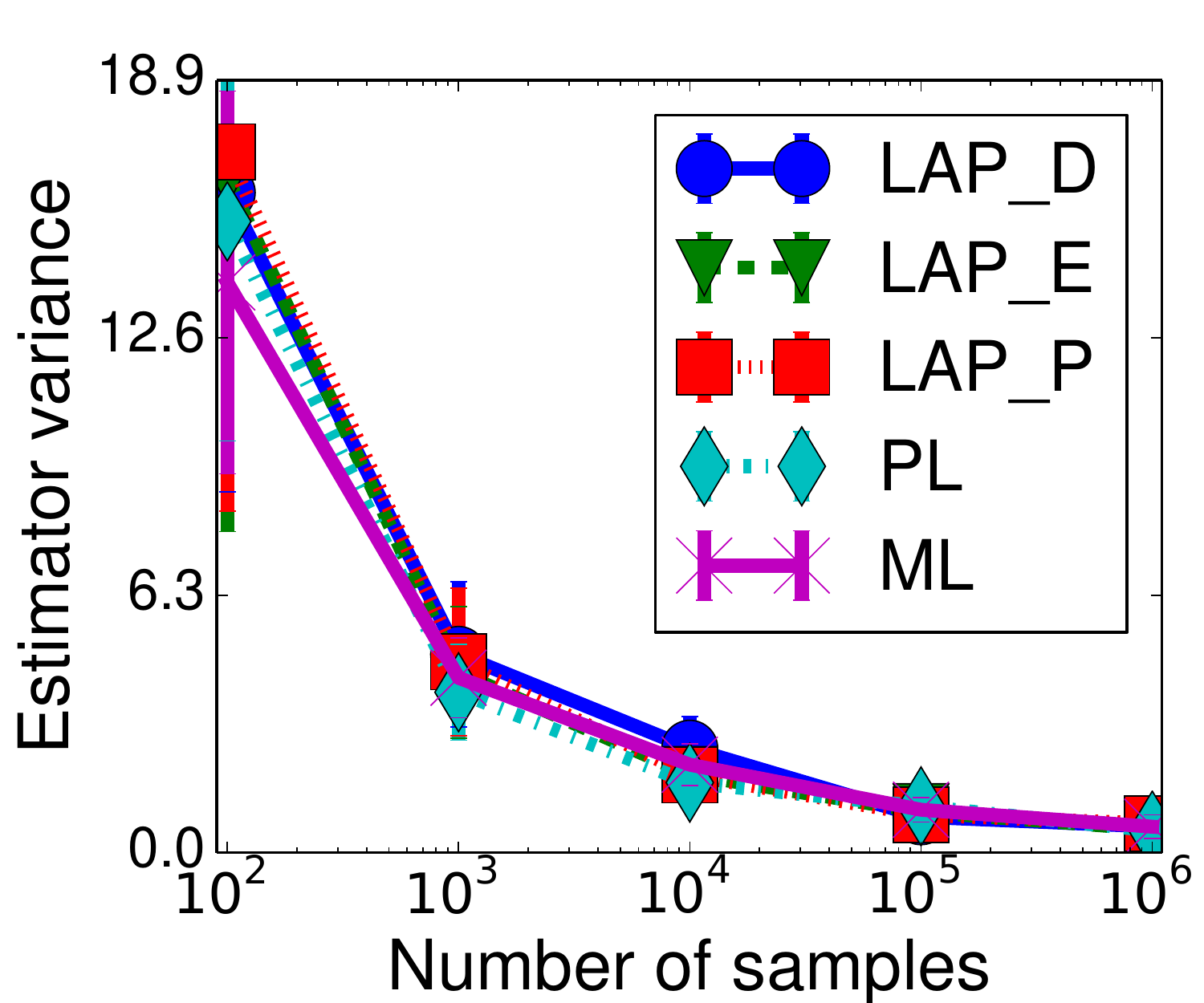}
\caption{\textbf{Left:} Relative error of parameter estimates compared to maximum likelihood for LAP and pseudo-likelihood on a Chimera $3\times 3\times 3$ model.  Error bars show the standard deviation over several runs.  \textbf{Right:} Variance of the parameter estimates for each algorithm.}
\label{fig:chimera-3x3x3}
\end{figure}

Earlier in this paper we used $\vx$ with no subscript to refer to the vector of all variables in the MRF since there was no risk of confusion.  In this section we increase the precision in our notation by using $S$ to denote the set of all variables and use $\vx_S$ instead of $\vx$ for the vector of all variables in the MRF.

\subsection{The LAP Argument}

Suppose we have a Gibbs distribution $p(\vx_S\,|\,\vtheta)$ which factors according to the clique system $\mathcal{C}$, and let $q\in\mathcal{C}$ be a particular clique of interest.  Let 
\begin{align*}
p(\vx_{A_q}\,|\, \vphi) &= \frac{1}{Z(\vphi)}\exp (- \sum_{c\in\mathcal{C}_q} E(\vx_c\,|\, \vphi_c) )
\end{align*}
be the marginal distribution on $A_q$ (with clique system $\mathcal{C}_q$) parametrized so that the potentials are normalized with respect to zero.

We can also write the marginal in the following way
\begin{align*}
p(\vx_{A_q}\,|\,\vtheta) &= \sum_{S\setminus A_q} p(\vx_S\,|\,\vtheta)
\\
&= \frac{1}{Z(\vtheta)} \sum_{S\setminus A_q}\exp ( -\sum_{c\in\mathcal{C}} E(\vx_c \,|\, \vtheta_c) )
\\
&= \frac{1}{Z(\vtheta)}\exp ( -E(\vx_q\,|\, \vtheta_q) - \hspace{-3mm}\sum_{c\in\mathcal{C}_q \setminus \{q\}}  \hspace{-3mm} E(\vx_c\,|\, \vtheta_{S\setminus q} ) )
\end{align*}
If the parametrization of $p(\vx_S\,|\,\vtheta)$ is also chosen to be normalized with respect to zero then the potentials of these two representations of the marginal must be equal. Theorem~\ref{thm:uniqueness} also tells us that the partition functions must be equal.  In particular we see that
\begin{align*}
 E(\vx_q\,|\, \vphi_q) = E(\vx_q\,|\, \vtheta_q)
\end{align*}
which implies that $\vtheta_q = \vphi_q$ if the parameters are identifiable.

\subsection{Consistency of LAP}

Let $\vtheta$ be the true vector of parameters taken from the unknown generating distribution $p(\vx_S \,|\, \vtheta)$ parametrized such that the potentials are normalized with respect to zero.  Suppose we have $N$ samples drawn \textit{iid} from this distribution.  Let $\hat{\vtheta}^{ML}$ be the ML parameters for $\vtheta$ given the data and let $\hat{\vtheta}^{LAP}$ be the corresponding LAP estimate.  We claim that $\hat{\vtheta}^{LAP} \to \vtheta$ as $N\to\infty$, provided the true marginal distributions are contained in the class of auxiliary MRFs.
\begin{proof}
Let $q \in \mathcal{C}$ be an arbitrary clique of interest. It is sufficient to show that $\hat{\vtheta}^{LAP}_q \to \vtheta_q$.  If $\vphi$ is the true parameter of the marginal over $\vx_{A_q}$ in normalized form, \textit{i.e.}\
\begin{align*}
p(\vx_{A_q}\,|\, \vphi) = \sum_{S\setminus A_q} p(\vx_S\,|\,\vtheta) \enspace,
\end{align*}
then it is known that
\begin{align*}
\hat{\vphi}^{ML} \to \vphi
\end{align*}
since
maximum likelihood in consistent under smoothness and identifiability assumptions (for example, see \citet{Fienberg:2012}).  From the LAP argument we see that $\vphi_q = \vtheta_q$ so $\hat{\vphi}^{ML}_q \to \vtheta_q$.
\end{proof}

\subsection{Relationship to maximum likelihood}

Here we prove that, under certain (strong) assumptions, LAP is exactly equal to maximum likelihood.  The main result here will be that under the required assumptions estimation by maximum likelihood and marginalization commute.

Suppose we have a discrete MRF on $\vx_S$ which factorizes according to the cliques $\mathcal{C}$, and let $q \in \mathcal{C}$ be a particular clique of interest.


We will make use of the following characterization of maximum likelihood estimates, which is proved in \cite{jordan2002introduction}.
\begin{lemma}
If a distribution $\hat{p}(\vx_S)$ satisfies that for each $c \in \mathcal{C}$
\begin{align*}
  \hat{p}(\vx_c) = \tilde{p}(\vx_c)
\end{align*}
then $\hat{p}(\vx_S)$ is a maximum likelihood estimate for the empirical distribution $\tilde{p}(\vx_S)$.
\label{lemma:jordan}
\end{lemma}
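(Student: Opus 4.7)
The plan is to reduce maximum likelihood to an information-projection problem and prove a Pythagorean identity that pins $\hat{p}$ as the minimizer. First I would observe that maximizing the likelihood $\sum_n \log p(\vx_n)$ over distributions $p$ in the MRF family $\calM$ is equivalent to minimizing $\KLpq{\tilde{p}}{p}$, since
\begin{align*}
\KLpq{\tilde{p}}{p} = -\entropy{\tilde{p}} - \sum_{\vx_S} \tilde{p}(\vx_S)\log p(\vx_S),
\end{align*}
and the entropy term does not depend on $p$. (I will assume, as is standard and implicit in the statement, that the candidate $\hat{p}$ lies in the MRF family determined by $\calC$, so that $\log\hat{p}(\vx_S)=\sum_{c}\log\hat{\psi}_c(\vx_c)-\log\hat{Z}$.)

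The key step is then to compute, for an arbitrary $p\in\calM$,
\begin{align*}
\KLpq{\tilde{p}}{p}-\KLpq{\tilde{p}}{\hat{p}}
 = \sum_{\vx_S}\tilde{p}(\vx_S)\bigl(\log\hat{p}(\vx_S)-\log p(\vx_S)\bigr).
\end{align*}
Because both $\hat{p}$ and $p$ factor over the cliques in $\calC$, the integrand $\log\hat{p}(\vx_S)-\log p(\vx_S)$ is a sum of clique-local terms (plus a constant from the log partition functions). Consequently the right-hand side only involves $\tilde{p}$ through its clique marginals $\tilde{p}(\vx_c)$. By the hypothesis $\tilde{p}(\vx_c)=\hat{p}(\vx_c)$ for every $c\in\calC$, I may substitute $\hat{p}$ for $\tilde{p}$ in each term without changing the value, obtaining
\begin{align*}
\KLpq{\tilde{p}}{p}-\KLpq{\tilde{p}}{\hat{p}}
 = \sum_{\vx_S}\hat{p}(\vx_S)\bigl(\log\hat{p}(\vx_S)-\log p(\vx_S)\bigr)
 = \KLpq{\hat{p}}{p}.
\end{align*}

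This yields the Pythagorean identity $\KLpq{\tilde{p}}{p}=\KLpq{\tilde{p}}{\hat{p}}+\KLpq{\hat{p}}{p}$, and since KL divergence is nonnegative we conclude $\KLpq{\tilde{p}}{p}\ge \KLpq{\tilde{p}}{\hat{p}}$ for every $p\in\calM$. Hence $\hat{p}$ minimizes the KL divergence and therefore maximizes the likelihood, as claimed.

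The main obstacle is the clique-by-clique substitution step: it relies on the fact that every factor of $\hat{p}$ and of $p$ is a function of the variables in some $c\in\calC$, so that taking expectations against $\tilde{p}$ only ever probes $\tilde{p}(\vx_c)$. Once one is careful that the partition-function constants combine correctly (they contribute a $\vx_S$-independent term whose expectation under $\tilde{p}$ and under $\hat{p}$ agree because both are probability distributions), the rest is bookkeeping. I would also remark that strict positivity of $\hat{p}$ together with the Hammersley--Clifford theorem is what lets us treat $\hat{p}$ as a member of $\calM$ in the first place; without positivity one has to be more careful about the logarithms.
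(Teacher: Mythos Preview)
Your proof is correct and is the standard information-geometric argument for this fact. Note, however, that the paper does not actually prove this lemma: it simply states the result and cites \cite{jordan2002introduction} for the proof. The argument you give---the Pythagorean identity $\KLpq{\tilde{p}}{p}=\KLpq{\tilde{p}}{\hat{p}}+\KLpq{\hat{p}}{p}$, obtained by exploiting that $\log\hat{p}-\log p$ is a sum of clique-local terms so that expectations depend only on clique marginals---is precisely the classical one, and your handling of the partition-function constants and the positivity/Hammersley--Clifford caveat is appropriate.

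For comparison, an alternative route (closer in spirit to how the paper uses the lemma) is to invoke the first-order optimality condition directly: the log-likelihood gradient in Equation~\eqref{eq:gradml} vanishes exactly when the model and empirical clique marginals agree, and concavity of $\ell(\vtheta)$ then gives global optimality. Your KL-based argument is a bit cleaner because it does not commit to a particular log-linear parametrization and yields the stronger Pythagorean decomposition as a by-product.
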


This characterization allows us to derive an explicit expression for a maximum likelihood estimate of $\hat{p}(\vx_S)$.

\begin{proposition}
The distribution
\begin{align*}
\hat{p}(\vx_S) = \frac{\tilde{p}(\vx_{A_q})\tilde{p}(\vx_{S\setminus
      q})}{\tilde{p}(\vx_{A_q\setminus q})}
\end{align*}
is a maximum likelihood estimate for $\tilde{p}(\vx_S)$.
\label{prop:explicit-ml}
\end{proposition}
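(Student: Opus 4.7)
The plan is to invoke Lemma~\ref{lemma:jordan} by verifying that the marginals of the proposed $\hat{p}$ match those of $\tilde{p}$ on every clique $c\in\mathcal{C}$. The key structural fact is that every maximal clique $c\in\mathcal{C}$ lies entirely in one of the two overlapping regions $A_q$ or $S\setminus q$: if $c\cap q\neq\emptyset$, then by definition of $A_q$ we have $c\subseteq A_q$, and otherwise $c\subseteq S\setminus q$. So it suffices to show the two ``big'' marginals match, namely $\hat{p}(\vx_{A_q})=\tilde{p}(\vx_{A_q})$ and $\hat{p}(\vx_{S\setminus q})=\tilde{p}(\vx_{S\setminus q})$, since any clique $c$ sits inside one of these and the equality on $c$ follows by a further marginalization.

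To compute $\hat{p}(\vx_{A_q})$, first I would note that $S\setminus q = (A_q\setminus q)\cup(S\setminus A_q)$ is a disjoint union, so summing out $\vx_{S\setminus A_q}$ gives
\begin{align*}
\hat{p}(\vx_{A_q})
&= \sum_{\vx_{S\setminus A_q}}\frac{\tilde{p}(\vx_{A_q})\,\tilde{p}(\vx_{S\setminus q})}{\tilde{p}(\vx_{A_q\setminus q})}
 = \frac{\tilde{p}(\vx_{A_q})}{\tilde{p}(\vx_{A_q\setminus q})}\sum_{\vx_{S\setminus A_q}}\tilde{p}(\vx_{A_q\setminus q},\vx_{S\setminus A_q})
 = \tilde{p}(\vx_{A_q}).
\end{align*}
A symmetric calculation, this time summing out $\vx_q$ from $\hat{p}(\vx_S)$, uses $\sum_{\vx_q}\tilde{p}(\vx_q,\vx_{A_q\setminus q})=\tilde{p}(\vx_{A_q\setminus q})$ to cancel the denominator and yields $\hat{p}(\vx_{S\setminus q})=\tilde{p}(\vx_{S\setminus q})$. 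Normalization of $\hat{p}$ is obtained as a byproduct: rewriting $\hat{p}(\vx_S)=\tilde{p}(\vx_q\mid\vx_{A_q\setminus q})\,\tilde{p}(\vx_{S\setminus q})$ and summing first over $\vx_q$ and then over $\vx_{S\setminus q}$ gives $1$.

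Given the two ``big'' marginals, I finish by a marginalization-within-a-marginal argument. For any $c\in\mathcal{C}$ with $c\subseteq A_q$,
\begin{align*}
\hat{p}(\vx_c) = \sum_{\vx_{A_q\setminus c}}\hat{p}(\vx_{A_q}) = \sum_{\vx_{A_q\setminus c}}\tilde{p}(\vx_{A_q}) = \tilde{p}(\vx_c),
\end{align*}
and for any $c\in\mathcal{C}$ with $c\cap q=\emptyset$, the analogous computation through $\hat{p}(\vx_{S\setminus q})$ gives $\hat{p}(\vx_c)=\tilde{p}(\vx_c)$. Lemma~\ref{lemma:jordan} then delivers the conclusion that $\hat{p}$ is a maximum likelihood estimate for $\tilde{p}$.

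The main obstacle I anticipate is making sure the hypotheses of Lemma~\ref{lemma:jordan} are honestly satisfied: beyond matching clique marginals, one usually needs $\hat{p}$ to belong to the model family (i.e.\ factor according to $\mathcal{C}$), and one needs $\tilde{p}(\vx_{A_q\setminus q})>0$ wherever the ratio appears. The factorization can be read off from the identity $\hat{p}(\vx_S)=\tilde{p}(\vx_q\mid\vx_{A_q\setminus q})\,\tilde{p}(\vx_{S\setminus q})$: the conditional depends only on $A_q$ (whose internal clique structure is a subset of $\mathcal{C}$), while $\tilde{p}(\vx_{S\setminus q})$ involves the complementary cliques; under the section's ``strong assumptions'' (the empirical $\tilde{p}$ factors over $\mathcal{C}$ on the relevant sub-graphs, with full support on $A_q\setminus q$), each factor absorbs into the appropriate product of clique potentials and $\hat{p}$ lies in the model class, legitimizing the use of the lemma.
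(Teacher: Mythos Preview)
Your proposal is correct and follows essentially the same route as the paper: compute the two ``big'' marginals $\hat{p}(\vx_{A_q})=\tilde{p}(\vx_{A_q})$ and $\hat{p}(\vx_{S\setminus q})=\tilde{p}(\vx_{S\setminus q})$, observe that every $c\in\mathcal{C}$ sits inside one of these two sets, and invoke Lemma~\ref{lemma:jordan}. Your additional remarks on normalization, positivity of $\tilde{p}(\vx_{A_q\setminus q})$, and the need for $\hat{p}$ to lie in the model family are more explicit than the paper's proof, which silently absorbs these into the section's standing assumptions.
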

\begin{proof}
To see this we compute
\begin{align*}
  \sum_{q} \hat{p}(\vx_{S}) &= \sum_{q} \frac{\tilde{p}(\vx_{A_q})
    \tilde{p}(\vx_{S\setminus q})}{\tilde{p}(\vx_{A_q\setminus q})} = \tilde{p}(\vx_{S\setminus q})
\end{align*}
and
\begin{align*}
  \sum_{S\setminus A_q} \hat{p}(\vx_{S}) &= \sum_{S\setminus A_q}
  \frac{\tilde{p}(\vx_{A_q}) \tilde{p}(\vx_{S\setminus q})}{\tilde{p}(\vx_{A_q\setminus q})} = \tilde{p}(\vx_{A_q})
\end{align*}
For an arbitrary clique $c \in \mathcal{C}$, either $c \subset S\setminus q$ or $c \subset A_q$, and we see that $\hat{f}(x_c) = \tilde{f}(x_c)$ by further marginalizing one of the above expressions.  This shows that the expression we have given for $\hat{p}(\vx_S)$ satisfies the criteria of Lemma~\ref{lemma:jordan}, and is therefore a maximum likelihood estimate for $\tilde{p}(\vx_S)$.
\end{proof}

Suppose we have a family of distributions $\mathcal{F}$ on $\vx_S$ which satisfy the Markov properties of the MRF, and suppose that $\hat{p}(\vx_S) \in \mathcal{F}$ where $\hat{p}(\vx_S)$ is defined as in Proposition~\ref{prop:explicit-ml}.

Define the auxiliary family $\mathcal{F}_q$ associated with the clique $q$ as follows.
\begin{align*}
\mathcal{F}_q = \{ \sum_{S\setminus A_q} p(\vx_S) \,|\, p(\vx_S) \in \mathcal{F}\}
\end{align*}
That is, $\mathcal{F}_q$ is the family of distributions obtained by marginalizing the family $\mathcal{F}$ over $S\setminus A_q$.

\begin{proposition}
The auxiliary family $\mathcal{F}_q$ contains the marginal empirical distribution $\tilde{p}(\vx_{A_q})$.  Moreover $\hat{p}(\vx_{A_q}) = \tilde{p}(\vx_{A_q})$ is a maximum likelihood estimate for $\tilde{p}(\vx_{A_q})$ in $\mathcal{F}_q$.
\label{prop:aux-expressiveness}
\end{proposition}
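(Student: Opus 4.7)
The plan is to piggyback on the computations already carried out in the proof of Proposition~\ref{prop:explicit-ml} for the first claim, and to invoke the standard Gibbs inequality for the second.

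First, I would verify the membership $\tilde{p}(\vx_{A_q}) \in \mathcal{F}_q$. By the hypothesis stated immediately before the proposition, $\hat{p}(\vx_S) \in \mathcal{F}$, where $\hat{p}$ is the explicit distribution constructed in Proposition~\ref{prop:explicit-ml}. The definition of $\mathcal{F}_q$ as the image of $\mathcal{F}$ under marginalization over $S\setminus A_q$ then immediately gives $\sum_{S\setminus A_q} \hat{p}(\vx_S) \in \mathcal{F}_q$. But the proof of Proposition~\ref{prop:explicit-ml} already contains the explicit calculation
\begin{align*}
\sum_{S\setminus A_q} \hat{p}(\vx_S) = \sum_{S\setminus A_q} \frac{\tilde{p}(\vx_{A_q})\,\tilde{p}(\vx_{S\setminus q})}{\tilde{p}(\vx_{A_q\setminus q})} = \tilde{p}(\vx_{A_q}),
\end{align*}
so in fact $\tilde{p}(\vx_{A_q})$ itself is an element of $\mathcal{F}_q$. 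This simultaneously settles the first claim and identifies the natural candidate for the MLE.

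Second, I would show that $\hat{p}(\vx_{A_q}):=\tilde{p}(\vx_{A_q})$ maximizes the likelihood over $\mathcal{F}_q$. For any $p \in \mathcal{F}_q$, the scaled expected log-likelihood under the marginal empirical distribution is
\begin{align*}
\sum_{\vx_{A_q}} \tilde{p}(\vx_{A_q}) \log p(\vx_{A_q}) = -\entropy{\tilde{p}} - \KLpq{\tilde{p}}{p},
\end{align*}
which by nonnegativity of KL divergence is bounded above by $-\entropy{\tilde{p}}$, with equality iff $p = \tilde{p}(\vx_{A_q})$. Because the first step already placed $\tilde{p}(\vx_{A_q})$ inside $\mathcal{F}_q$, this upper bound is actually attained within the family, giving the MLE property. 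As an alternative route one could appeal directly to Lemma~\ref{lemma:jordan}: every clique of the marginal MRF is contained in $A_q$, so the clique-matching condition $\hat{p}(\vx_c) = \tilde{p}(\vx_c)$ is trivially satisfied when $\hat{p}(\vx_{A_q}) = \tilde{p}(\vx_{A_q})$.

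The main obstacle I anticipate is conceptual rather than computational: $\mathcal{F}_q$ is defined implicitly via a marginalization operation rather than through a concrete clique system, so one must resist the temptation to reason about its factorization structure directly. The outline above sidesteps this by using only the defining property of $\mathcal{F}_q$ in the membership step and the purely distributional Gibbs inequality in the optimization step, never requiring an explicit description of the marginal clique structure.
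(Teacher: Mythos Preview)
Your proof is correct. The membership step is identical to the paper's: both use the standing assumption $\hat{p}(\vx_S)\in\mathcal{F}$ together with the marginalization identity $\sum_{S\setminus A_q}\hat{p}(\vx_S)=\tilde{p}(\vx_{A_q})$ from Proposition~\ref{prop:explicit-ml} to place $\tilde{p}(\vx_{A_q})$ in $\mathcal{F}_q$.

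For the MLE step the routes diverge slightly. The paper argues that the log-likelihood gradient in Equation~\ref{eq:gradml} vanishes once the model and empirical distributions coincide, which implicitly leans on the log-linear parametrization (and its concavity) introduced only later in the section. Your Gibbs-inequality argument is more elementary and more self-contained: it needs no parametric description of $\mathcal{F}_q$ at all, only that $\tilde{p}(\vx_{A_q})$ has already been shown to lie in the family, so the global maximizer of $p\mapsto\sum\tilde{p}\log p$ is attained. Your alternative via Lemma~\ref{lemma:jordan} is also valid and is in the same spirit as the paper's gradient remark, since both amount to checking that clique marginals (equivalently, expected sufficient statistics) match. Either way, your observation that one should avoid reasoning about the clique structure of $\mathcal{F}_q$ directly is well taken and is exactly why the KL route is the cleanest here.
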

\begin{proof}
Recall that $\hat{p}(\vx_S)$ from Proposition~\ref{prop:explicit-ml} is in $\mathcal{F}$ by assumption.  Thus,
\begin{align*}
\sum_{S\setminus A_q}\hat{p}(\vx_S) = \tilde{p}(\vx_{A_q})
\end{align*}
is in $\mathcal{F}_q$ by definition.  That $\hat{p}(\vx_{A_q}) \in \mathcal{F}_q$ is a maximum likelihood estimate follows since the log likelihood gradient in Equation~\ref{eq:gradml} is zero when the model and empirical distributions are equal.
\end{proof}

Suppose we can represent the family $\mathcal{F}$ as a Gibbs family, i.e.\
\begin{align*}
\mathcal{F} = \mathcal{F}(\vTheta) = \{ p(\vx_S\,|\,\vtheta) \,|\, \vtheta \in \vTheta \}
\end{align*}
for some domain of parameters $\vTheta$, where
\begin{align*}
p(\vx_S | \vtheta) = \frac{1}{Z(\vtheta)}\exp ( -\sum_{c\in\mathcal{C}} E(\vx_c \,|\, \vtheta_c) ) \enspace.
\end{align*}
Moreover, suppose we have chosen this parameterization so that the potential functions are normalized with respect to zero.

Since $\mathcal{F}$ is representable as a Gibbs family then the auxiliary family $\mathcal{F}_q$ is also representable as a Gibbs family with
\begin{align*}
\mathcal{F}_q = \mathcal{F}_q(\vPhi) = \{ p(\vx_{A_q}\,|\,\vphi) \,|\, \vphi \in \vPhi \}
\end{align*}
for some domain of parameters $\vPhi$.  We will again suppose that this parameterization is chosen so that the potential functions are normalized with respect to zero.

We have already shown that maximum likelihood estimates for $\tilde{p}(\vx_S)$ and $\tilde{p}(\vx_{A_q})$ exist in the families $\mathcal{F}$ and $\mathcal{F}_q$, respectively.  Since we have chosen the parameterizations of these families to be normalized we also have unique maximum likelihood parameters $\hat{\vtheta} \in \vTheta$ and $\hat{\vphi} \in \vPhi$ such that $p(\vx_S\,|\,\hat{\vtheta}) \in \mathcal{F}(\vTheta)$ is a maximum likelihood estimate for $\tilde{p}(\vx_S)$ and $p(\vx_{A_q}\,|\,\hat{\vphi}) \in \mathcal{F}(\vPhi)$ is a maximum likelihood estimate for $\tilde{p}(\vx_{A_q})$.

We can now prove the main result of this section.

\begin{theorem}
Under the assumptions used in this section, estimating the joint parameters by maximum likelihood and integrating the resulting maximum likelihood distribution gives the same result as integrating the joint family of distributions and performing maximum likelihood estimation in the marginal family.  Concisely,
\begin{align*}
\sum_{S\setminus A_q} p(\vx_S \,|\, \hat{\vtheta}) = p(\vx_{A_q} \,|\, \hat{\vphi})
\end{align*}
\label{thm:integrate-estimate}
\end{theorem}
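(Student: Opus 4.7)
The plan is to simply chain together the two previous propositions using the uniqueness afforded by Theorem~\ref{thm:uniqueness}. Both sides of the claimed equality, by design of the hypotheses, reduce to the same object, namely the empirical marginal $\tilde{p}(\vx_{A_q})$.

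First, I would identify the left-hand side. By assumption, $\hat{p}(\vx_S)$ from Proposition~\ref{prop:explicit-ml} lies in $\mathcal{F}(\vTheta)$, and since the parametrization is normalized with respect to zero, Theorem~\ref{thm:uniqueness} guarantees a unique $\hat{\vtheta} \in \vTheta$ with $p(\vx_S \,|\, \hat{\vtheta}) = \hat{p}(\vx_S)$. This $\hat{\vtheta}$ is also the (unique) ML estimate in $\mathcal{F}(\vTheta)$ by Lemma~\ref{lemma:jordan}. Then, using the marginalization identity already computed inside the proof of Proposition~\ref{prop:explicit-ml},
\begin{align*}
\sum_{S \setminus A_q} p(\vx_S \,|\, \hat{\vtheta}) \;=\; \sum_{S \setminus A_q} \hat{p}(\vx_S) \;=\; \tilde{p}(\vx_{A_q}).
\end{align*}

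Next, I would identify the right-hand side. Proposition~\ref{prop:aux-expressiveness} tells us that $\tilde{p}(\vx_{A_q}) \in \mathcal{F}_q(\vPhi)$ and is the ML estimate for $\tilde{p}(\vx_{A_q})$ inside $\mathcal{F}_q(\vPhi)$. Because the parametrization of $\mathcal{F}_q(\vPhi)$ is also chosen to be normalized with respect to zero, Theorem~\ref{thm:uniqueness} again gives a unique $\hat{\vphi} \in \vPhi$ realizing this ML estimate, so $p(\vx_{A_q} \,|\, \hat{\vphi}) = \tilde{p}(\vx_{A_q})$. Combining the two displays yields
\begin{align*}
\sum_{S\setminus A_q} p(\vx_S \,|\, \hat{\vtheta}) \;=\; \tilde{p}(\vx_{A_q}) \;=\; p(\vx_{A_q} \,|\, \hat{\vphi}),
\end{align*}
which is the desired conclusion.

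The only subtle point, and the one I expect to be the main obstacle, is to make sure the uniqueness argument goes through on both sides. That is, one must justify that the marginalization of the Gibbs family lands in a family whose potentials can be written in the normalized-to-zero form (so that Theorem~\ref{thm:uniqueness} applies to $\mathcal{F}_q$), and that the representability hypothesis $\hat{p}(\vx_S) \in \mathcal{F}$ is what bridges Proposition~\ref{prop:explicit-ml} (a distribution-level statement) with a parameter-level ML estimate $\hat{\vtheta}$. Once the normalization conventions are in place, the two propositions snap together and the theorem follows without additional computation.
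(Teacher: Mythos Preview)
Your proposal is correct and follows essentially the same approach as the paper: both arguments chain Proposition~\ref{prop:explicit-ml} and Proposition~\ref{prop:aux-expressiveness} together via the identifications $p(\vx_S\,|\,\hat{\vtheta}) = \hat{p}(\vx_S)$ and $p(\vx_{A_q}\,|\,\hat{\vphi}) = \tilde{p}(\vx_{A_q})$. The only cosmetic difference is that the paper first establishes the full chain of equalities at the joint level and marginalizes at the end, whereas you marginalize the left-hand side immediately and compare both sides to $\tilde{p}(\vx_{A_q})$; the ingredients and logic are identical.
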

\begin{proof}
We have the following sequence of equalities:
\begin{align*}
p(\vx_S \,|\, \hat{\vtheta})
&\stackrel{(1)}{=} \hat{p}(\vx_S)
\stackrel{(2)}{=} \frac{\tilde{p}(\vx_{A_q})\tilde{p}(\vx_{S\setminus q})}{\tilde{p}(\vx_{A_q\setminus q})}
\\
&\stackrel{(3)}{=} \frac{\hat{p}(\vx_{A_q})\tilde{p}(\vx_{S\setminus q})}{\tilde{p}(\vx_{A_q\setminus q})}
\stackrel{(4)}{=} \frac{p(\vx_{A_q} \,|\, \hat{\vphi})\tilde{p}(\vx_{S\setminus q})}{\tilde{p}(\vx_{A_q\setminus q})}
\end{align*}
The first equality follows from the parameterization of $\mathcal{F}$, the second follows from Proposition~\ref{prop:explicit-ml}, the third from Proposition~\ref{prop:aux-expressiveness} and the fourth follows from the parameterization of $\mathcal{F}_q$.
The theorem is proved by summing both sides of the equality over $S\setminus A_q$.
\end{proof}

Applying the LAP argument to Theorem~\ref{thm:integrate-estimate} we see that $\hat{\vtheta}_q = \hat{\vphi}_q$.

\paragraph{Remark} The assumption that $\hat{p}(\vx_S) \in \mathcal{F}$ amounts to assuming that the \emph{empirical} distribution of the data factors according to the MRF.  This is very unlikely to hold in practice for finite data.  However, if the true model structure is known then this property does hold in the limit of infinite data.

\section{Conclusion}

We have presented a distributed learning algorithm for practical MRFs, where the parameters of each clique can be estimated in different machines. The algorithm is also data efficient in log-linear models, since the estimation of each clique parameter only requires access to local sufficient statistics of the data. Not only are the statistics local to the 1-neighborhoods of each clique, but they can also be precomputed.

Our experiments indicate that the LAP estimators behave similarly to pseudo-likelihood and maximum likelihood for large sample sizes. However, these alternative estimators do not enjoy the same data and model efficiencies as LAP. Finally, we proved that the proposed estimator is consistent.   

The work of Meng and colleagues only considered Gaussian graphical models probably as a result of their linear algebra relaxation techniques. However, it seems feasible to apply their algorithm to the discrete case. A comparison of both techniques is an immediate direction for future work. Combining the different proof techniques of Meng and ours is also of interest. A further addition to the theory would be the derivation of PAC bounds to improve our understanding of the sampling complexity of these estimators. 

This works opens up many directions for future work, including model selection, latent variables, and tied parameters. A distributed implementation on Apache Spark/Hadoop is one of our near goals.

\clearpage
\small{
\bibliography{mrf,mybib}
\bibliographystyle{icml2014}
}

\appendix
\clearpage


\end{document}